\pdfoutput=1
\documentclass[11pt]{article}

\usepackage[margin=1in]{geometry}
\usepackage[T1]{fontenc}
\usepackage[utf8]{inputenc}
\usepackage{lmodern}
\usepackage{microtype}
\usepackage{amsmath,amssymb,amsthm,mathtools}
\usepackage{booktabs}
\usepackage{array}
\usepackage{tabularx}
\usepackage{enumitem}
\usepackage{float}
\usepackage{xcolor}
\usepackage{xurl}
\usepackage[numbers,sort&compress]{natbib}
\usepackage{hyperref}
\usepackage[nameinlink,capitalise]{cleveref}

\hypersetup{
  colorlinks=true,
  linkcolor=blue!55!black,
  citecolor=green!35!black,
  urlcolor=blue!65!black,
  pdftitle={From Attention Masks to Inert Zero-Vector Tokens: OAttention and O-Closure for Token Dynamics},
  pdfauthor={Heyang Gong}
}

\newtheorem{definition}{Definition}
\newtheorem{proposition}{Proposition}
\newtheorem{theorem}{Theorem}
\newtheorem{remark}{Remark}
\newcommand{\R}{\mathbb{R}}
\newcommand{\presence}{\rho}
\newcommand{\OA}{\operatorname{OAttention}}

\title{From Attention Masks to Inert Zero-Vector Tokens:\\
OAttention and O-Closure for Token Dynamics}
\author{Heyang Gong}
\date{}

\begin{document}
\maketitle

\begin{abstract}
Attention masks are indispensable relation-level controls: they specify which
query--source pairs may interact. They do not, however, provide a
representation-carried token state that is intrinsically non-participating at
the attention boundary. We assign each token hidden carrier \(h_i\) a smooth
active-presence coefficient
\(p_i=\lVert h_i\rVert^2/(\tau+\lVert h_i\rVert^2)\). The same coefficient has
two roles: it gates information emitted by token \(i\), and it determines the
mass with which token \(i\) enters computations shared with other tokens.

OAttention is the support-coupled attention realization of this rule. It gates
the receiver output by \(p_i\) and weights source \(j\) by \(p_j\) in both the
attention numerator and partition, while retaining the standard score,
visibility relation, exponential competition, and value aggregation. This
minimal completion makes the zero-vector token a zero element and yields exact
null-receiver, null-source insertion, self-attention insertion, and
empty-support properties. The same token-level presence gives local
O-components (OFFN, ONorm, and OInject), presence-weighted OStandardize, the
O-Closure law \(M(H\oplus0)=M(H)\oplus0\), and an OTransformer by residual and
compositional closure.

The canonical operator is checked by contract tests and an independent GPU
evaluation. In a zero-fine-tuning retrofit of a cloned pretrained TabPFN v3
regressor, calibrated hidden-carrier OAttention and Full-O variants change mean
RMSE by $+0.088\%$ and $+0.177\%$, respectively, over 18 matched
dataset--seed cases. A two-block ablation further shows that OAttention alone
does not preserve a NULL state through ordinary host components, whereas the
declared OTransformer path does. These are scoped tests of exactness, active-path
compatibility, and compositional necessity; they do not establish universal
no-loss, arbitrary-host closure, learned attraction to the origin, or a general
semantics for missing values.
\end{abstract}

\section{Introduction}
\label{sec:introduction}

Suppressing information is a basic requirement in attention-based models. The
standard mechanism is a mask \(m_{ij}\), supplied to the operator to specify
which query--source relations are allowed. Such masks are indispensable for
causal structure, routing, permissions, and target visibility, but they do not
provide a token state whose non-participation is carried by its representation.

Padding, absent support, unused routing slots, and deliberately empty latent
states motivate a complementary requirement. At a declared operator boundary,
an empty state should emit no output as a query and contribute no support mass
along any relation for which it would otherwise be visible. A mask therefore
answers a relation-level question---which token pairs may interact---whereas a
structural NULL answers a state-level question---whether this representation
participates at all. The two controls are complementary: state-level nullness
does not replace causal, routing, permission, or target-visibility masks.

Standard scaled dot-product attention does not satisfy this requirement
\citep{vaswani2017attention}. For visible sources,
\begin{equation}
  \alpha_{ij}
  =\frac{\exp(q_i^\top k_j/\sqrt d)}
  {\sum_t\exp(q_i^\top k_t/\sqrt d)},
  \qquad
  y_i=\sum_j\alpha_{ij}v_j.
  \label{eq:sdpa}
\end{equation}
If \(q_i=0\), every finite logit is zero and the output is the average of the
visible values, not zero. If a source \((k_0,v_0)=(0,0)\) is appended, it adds
\(\exp(0)=1\) to the denominator. Its direct value contribution is zero, but
all pre-existing weights change. The origin is therefore an ordinary state of
softmax attention rather than an inert one.

We give state-level non-participation a single token-level coordinate. At a
declared hidden-carrier boundary, token \(i\) has presence
\begin{equation}
  p_i=\presence_\tau(h_i)
  =\frac{\lVert h_i\rVert_2^2}{\tau+\lVert h_i\rVert_2^2}.
  \label{eq:intro-token-presence}
\end{equation}
The coefficient has two roles. \emph{Local update participation} determines
whether token \(i\) emits newly generated information.
\emph{Context support participation} determines whether that token contributes
mass to a shared support, partition, or statistic. The same \(p_i\), rather
than a new gate for each component, governs both roles.

We introduce \emph{OAttention}, where O denotes the origin of the
representation space, as the support-coupled attention instance. Query and key
projections still determine the score geometry, but the receiver hidden state
supplies the output factor \(p_i\) and each source hidden state supplies its
support mass \(p_j\). OAttention otherwise retains the standard score,
externally supplied visibility relation, exponential competition, and value
aggregation. The visibility mask says whether an edge is permitted; token
presence says whether an otherwise visible state participates on that edge.

The broader principle is necessary because attention alone cannot make a
model null consistent. An additive position encoding can map zero to a
nonzero vector; an affine normalization or biased feed-forward branch can
manufacture a nonzero update; and an ordinary cross-token statistic can allow
an inserted zero to change every pre-existing output. O-Closure distinguishes
token-local maps, which must absorb zero, from support-coupled maps, which must
also assign zero mass to null sources. OFFN, ONorm, OInject, and
OStandardize instantiate these two cases, and closure under residual addition
and composition gives a conditional construction of an OTransformer.

Our contributions are:
\begin{enumerate}[leftmargin=1.6em,itemsep=0.3em]
  \item We formulate null consistency as exact operator properties that
        separate representation-carried, state-level participation from
        externally supplied, relation-level visibility masks; from one
        hidden-carrier presence coefficient we derive OAttention and prove its
        null and active-limit properties.
  \item We identify local update participation and context support
        participation as two placements of the same token presence, formulate
        their zero-extension law, and derive OFFN, ONorm, OInject,
        OStandardize, and a conditional OTransformer by residual and
        compositional closure.
  \item We test three distinct claims: exact implementation of the canonical
        operator, calibrated near-identity compatibility in a frozen pretrained
        TabPFN v3 host, and the necessity of whole-path closure in a scoped
        OTransformer ablation.
\end{enumerate}

These claims deliberately have different scopes. Theorems establish the
declared algebraic contracts; finite-precision evaluations check their
implementation; and task metrics are bounded compatibility observations rather
than universal non-inferiority tests. We do not claim closure for arbitrary
tokenizers, routers, caches, or
readouts; learned zero-attractor dynamics; or a general semantics for missing
values.

\section{Related work}
\label{sec:related-work}

\paragraph{Sparse and no-op attention.}
Sparsemax and entmax can assign individual sources zero weight, but their
simplex constraint prevents an all-zero row
\citep{martins2016sparsemax,peters2019entmax}. Sigmoid attention removes
row-wise normalization, but a zero dot product is not intrinsically null
\citep{ramapuram2025sigmoid}. ReLA and Softpick instead allow zero aggregate
updates through rectified scores
\citep{zhang2021rela,zuhri2026softpick}. These mechanisms show that attention
need not always emit a convex combination. Our question is narrower: whether a
representation-carried state can be inert as both receiver and source while
the active regime retains vanilla softmax. The operator-level distinction from
score-space no-op attention is made after OAttention is defined.

\paragraph{Gates, routing, sinks, and spare capacity.}
Output and value-state gates attenuate attention updates
\citep{bondarenko2023quantizable,qiu2025gated,bu2026vga}, while conditional
computation routes tokens through selected feed-forward or attention branches
\citep{ainslie2023colt5,zeng2023skip}. Attention-sink studies analyze
content-light destinations and default no-op behavior
\citep{clark2019bert,chen2021maskalign,xiao2023streaming,gu2024sink,
ranmilo2026sinks,fesser2026sinks}. Softmax-plus-one supplies fixed spare
partition mass, and StableMask modifies causal masking with pseudo-attention
values \citep{miller2023offbyone,hu2024outeffhop,yin2024stablemask}. These lines
of work control branches, destinations, or relations; they do not by themselves
identify an exact-zero carrier with the null-insertion contract studied here.
A trainable-key ``zero token'' used for cyclic refinement is likewise an
information-bearing control state rather than a structural NULL
\citep{li2025zerotoken}.

\paragraph{Norms, sets, and missing data.}
Vector norms have been used to analyze attention contributions and normalize
query--key geometry
\citep{kobayashi2020norms,henry2020qknorm,govindarajan2026quest}; capsule
squash provides a related radial factor \citep{sabour2017capsules}. Deep Sets
and Set Transformer provide the broader context of permutation-aware set
processing \citep{zaheer2017deepsets,lee2019settransformer}, although
permutation symmetry alone does not imply invariance to adjoining an unmasked
zero element. Missing-data models may either suppress missing entries or encode
missingness as information \citep{caruso2026naim,chang2025mfan}; this motivates
our strict separation between structural NULL and informative missingness.

\section{From relation masks to a null state}
\label{sec:null-consistency}

Let an attention operator consume hidden receiver carriers
\((h_i)_{i\in I}\), hidden source carriers \((h_j)_{j\in J}\), their projected
queries, keys, and values, and a visibility relation
\(m_{ij}\in\{0,1\}\). The mask is indexed by query--source relations and is
supplied as a separate operator input rather than implied by token state. A
\emph{zero-vector token} at the canonical boundary is \(h=0\). A projected
zero in one head, a zero value with a nonzero carrier, an observed scalar
equal to zero, and a learned \texttt{[MASK]} token are different objects.

\begin{definition}[Null consistency]
An attention operator is null consistent at its declared carrier boundary if
it satisfies:
\begin{enumerate}[leftmargin=1.5em,itemsep=0.2em]
  \item \textbf{Null receiver:} \(h_i=0\) implies \(y_i=0\).
  \item \textbf{Null-source insertion:} inserting any finite number of source
        carriers \(h_j=0\) changes no pre-existing output or old source weight.
  \item \textbf{Self-attention insertion:} inserting a zero hidden carrier
        leaves old outputs unchanged and gives the new receiver a zero output.
  \item \textbf{Empty support:} an empty or fully masked source set returns a
        finite zero output and zero weights.
\end{enumerate}
\end{definition}

Null consistency is an operator effect, not a replacement for masking. For all
pre-existing receivers, inserting \(h_j=0\) should be equivalent to removing it
or masking its entire source column, while leaving old--old relations and
metadata unchanged. The state and relation remain semantically distinct: a
learned \texttt{[MASK]} embedding may deliberately carry information, and
masking a source column does not deactivate the receiver row at the same
position. A full self-attention NULL also has zero receiver presence, so its
emitted output is zero regardless of visible context; this does not imply that
its internal attention weights equal those of a fully masked row. A query
placeholder that must read context can instead be receiver-active and
source-inactive; it is not a full zero-vector token.

Standard attention violates the first two properties. For a visible zero
carrier under zero-preserving projections, the old denominator \(Z_i\) becomes
\(Z_i+1\), and every old weight is multiplied by \(Z_i/(Z_i+1)\). For a zero
receiver carrier, the projected query is zero and the score function loses all
directional distinctions but not the aggregation itself. These failures arise
from the normalization topology, not from numerical instability.

\section{Token presence and OAttention}
\label{sec:oattention}

\subsection{Smooth radial presence}

For \(\tau>0\), define
\begin{equation}
  \presence_\tau(x)
  =\frac{\lVert x\rVert_2^2}{\tau+\lVert x\rVert_2^2}.
  \label{eq:presence}
\end{equation}
The map is radial, differentiable, bounded in \([0,1)\), exactly zero only at
the origin, and approaches one on any fixed nonzero vector as
\(\tau\to0^+\). Near the origin,
\(\presence_\tau(x)=\lVert x\rVert^2/\tau+O(\lVert x\rVert^4)\), so
attenuation is continuous and quadratic rather than thresholded.

The norm is evaluated before any normalization that would erase magnitude.
The construction adds no learned projection, but \(\tau\) remains a scale
hyperparameter and the learned representations determine the gate value.

\paragraph{Relation to norm-derived mechanisms.}
Unlike query--key normalization, which changes score geometry, OAttention reads
the carrier norm before normalization and assigns the resulting scalar a
participation semantics. The radial factor in Eq.~\eqref{eq:presence} also
appears in capsule squash, but capsule squash uses it to rescale a vector rather
than to couple receiver output with source support.

\subsection{Canonical hidden-carrier operator}

Let \(h_i\) be a receiver carrier and \(h_j\) a source carrier. Suppressing
head indices, define
\begin{equation}
  p_i=\presence_\tau(h_i),\qquad
  q_i=W_Qh_i,\quad k_j=W_Kh_j,\quad v_j=W_Vh_j,\qquad
  s_{ij}=\frac{q_i^\top k_j}{\sqrt d}.
  \label{eq:hidden-carrier-projections}
\end{equation}
For an externally supplied visibility relation \(m_{ij}\) and a small
\(\varepsilon_{\mathrm{den}}>0\), OAttention is
\begin{align}
  u_{ij}&=m_{ij}p_j\exp(s_{ij}),\\
  w_{ij}&=\frac{u_{ij}}
    {\varepsilon_{\mathrm{den}}+\sum_tu_{it}},\\
  \OA_i^H(H;m)
    &=p_i\sum_jw_{ij}v_j
     =p_i
       \frac{\sum_jm_{ij}p_j\exp(s_{ij})v_j}
       {\varepsilon_{\mathrm{den}}+
        \sum_jm_{ij}p_j\exp(s_{ij})}.
  \label{eq:oattention}
\end{align}
One token coefficient now governs both roles. The receiver factor \(p_i\)
controls emitted output; the source factor \(p_j\) enters both numerator and
partition and therefore controls support mass. It is shared across heads at
this hidden boundary. The denominator stabilizer only totalizes empty support;
it is not a learned NULL destination or an attention objective.

\begin{proposition}[Hidden-carrier null receiver]
For all finite sources and masks, \(h_i=0\) implies
\(\OA_i^H(H;m)=0\).
\end{proposition}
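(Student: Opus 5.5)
The plan is a direct computation from the definition in Eq.~\eqref{eq:oattention}. The argument has two parts: first show that the receiver factor vanishes, then show that the factor it multiplies is a finite real vector, so the product is exactly zero rather than an indeterminate form.

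\textbf{Step 1 (receiver factor).} With \(h_i=0\) we have \(\lVert h_i\rVert_2^2=0\). Because \(\tau>0\), Eq.~\eqref{eq:presence} gives \(p_i=\presence_\tau(0)=0/\tau=0\), and this quotient is well defined.

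\textbf{Step 2 (finiteness of the aggregate).} I will show that \(A_i=\sum_j w_{ij}v_j\) is a finite vector for every finite source set and every mask.
\begin{itemize}[leftmargin=1.5em,itemsep=0.2em]
  \item For \(h_i=0\), the query is \(q_i=W_Qh_i=0\). Hence every score is \(s_{ij}=0\) and \(\exp(s_{ij})=1\). For a general receiver one only needs the scores to be finite real numbers, which holds because all carriers and projections are finite.
  \item Each \(u_{ij}=m_{ij}p_j\exp(s_{ij})\) is finite and nonnegative, since \(m_{ij}\in\{0,1\}\) and \(p_j\in[0,1)\).
  \item The denominator \(\varepsilon_{\mathrm{den}}+\sum_t u_{it}\) is therefore at least \(\varepsilon_{\mathrm{den}}>0\). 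No division by zero can occur, even when the source set is empty, fully masked, or made up entirely of zero carriers.
  \item Each weight satisfies \(0\le w_{ij}\le 1\), and the \(v_j=W_Vh_j\) are finite. So \(A_i\) is a finite sum of finite vectors.
\end{itemize}

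\textbf{Step 3 (conclusion).} By Steps 1 and 2, \(\OA_i^H(H;m)=p_iA_i=0\cdot A_i=0\). Step 1 uses only \(\tau>0\), so the conclusion holds for every \(\tau>0\). This is exactly why the operator uses the multiplicative receiver gate rather than relying on \(q_i=0\): under \(q_i=0\) alone, standard attention would return the average of the visible values.

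The only point needing care, which I expect to be the main (mild) obstacle, is Step 2. One must rule out a \(0\cdot\infty\) or \(0/0\) situation, and that is exactly what the strictly positive stabilizer \(\varepsilon_{\mathrm{den}}\) and the finiteness of the carriers guarantee.
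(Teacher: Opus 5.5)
Your proof is correct and follows the paper's own argument: \(p_i=\presence_\tau(0)=0\) multiplies the entire aggregate in Eq.~\eqref{eq:oattention}. Your Step 2 uses \(\varepsilon_{\mathrm{den}}>0\) to rule out a \(0\cdot\infty\) or \(0/0\) form, which simply makes explicit the finiteness of the aggregate that the paper's one-line proof leaves implicit.
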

\begin{proof}
Equation~\eqref{eq:presence} gives \(p_i=\presence_\tau(0)=0\), which
multiplies the complete aggregate in Eq.~\eqref{eq:oattention}.
\end{proof}

\begin{proposition}[Hidden-carrier null-source insertion]
Appending any finite number of source carriers \(h_j=0\), with arbitrary
visibility bits and compatible old metadata, leaves every pre-existing output
and old source weight unchanged.
\end{proposition}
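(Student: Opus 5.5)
The plan is a direct term-by-term comparison of Eq.~\eqref{eq:oattention} before and after insertion. Fix a pre-existing receiver \(i\), with carrier \(h_i\) and hence \(p_i\) unchanged. Let the old source index set be \(J\), and let the appended set be \(J_0\) with \(h_j=0\) for every \(j\in J_0\). ``Compatible old metadata'' is taken to mean that the visibility bits \(m_{ij}\) for \(i\) a pre-existing receiver and \(j\in J\) are exactly the old ones. The bits \(m_{ij}\) with \(j\in J_0\) are arbitrary in \(\{0,1\}\).

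\textbf{Step 1: null sources have zero unnormalized mass.} For \(j\in J_0\), Eq.~\eqref{eq:presence} gives \(p_j=\presence_\tau(0)=0\). The projections are linear, so \(k_j=W_K0=0\) and \(s_{ij}=0\), which makes \(\exp(s_{ij})=1\) finite. Hence
\(u_{ij}=m_{ij}\cdot 0\cdot 1=0\) whatever the value of \(m_{ij}\). The same conclusion needs only that \(s_{ij}\) be finite, so it also covers projections with bias.

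\textbf{Step 2: old unnormalized masses and the partition are unchanged.} For \(j\in J\), the quantity \(u_{ij}=m_{ij}p_j\exp(s_{ij})\) depends only on \(h_i\), \(h_j\), and the old bit \(m_{ij}\). It is therefore identical before and after insertion. The new partition is
\(\varepsilon_{\mathrm{den}}+\sum_{t\in J}u_{it}+\sum_{t\in J_0}u_{it}\). By Step 1 this equals the old partition. It is also at least \(\varepsilon_{\mathrm{den}}>0\), so every quotient is well defined.

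\textbf{Step 3: old weights and outputs are unchanged.} Every old weight \(w_{ij}\), \(j\in J\), is the same numerator over the same denominator, so it is unchanged. In the output sum, each new term \(w_{ij}v_j\) with \(j\in J_0\) has \(w_{ij}=0\) and \(v_j=0\), so it vanishes. The remaining sum over \(J\) is the old one. Multiplying by the unchanged \(p_i\) gives the old output.

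I expect no real obstacle. The only care point is making explicit that the empty-support case is covered and that no \(0\cdot\infty\) arises. Both are handled by the stabilizer \(\varepsilon_{\mathrm{den}}\) and by the finiteness of \(\exp(s_{ij})\) at a zero key.
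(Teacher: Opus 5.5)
Your proof is correct and follows the same route as the paper: \(p_j=\presence_\tau(0)=0\) forces \(u_{ij}=0\) for every inserted source, so the old numerators, the partition, the old weights, and the receiver factor \(p_i\) are all unchanged. The extra care you add, namely finiteness of \(\exp(s_{ij})\) and the positive stabilizer, only makes explicit what the paper's one-line proof leaves implicit; note also that \(w_{ij}=0\) alone kills each new value term, so \(v_j=0\) is not needed, and this is what lets your remark about biased projections go through.
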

\begin{proof}
Each inserted source has \(p_j=0\), hence \(u_{ij}=0\) for every receiver. It
contributes neither to the numerator nor to the partition; all old terms and
receiver factors are unchanged.
\end{proof}

\begin{proposition}[Self-attention insertion and empty support]
Inserting a hidden zero carrier in self-attention leaves all old outputs
unchanged and gives the inserted receiver output zero. If effective support is
empty, all weights and outputs are finite zero.
\end{proposition}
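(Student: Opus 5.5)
The plan is to reduce the self-attention case to the two preceding propositions, and then treat empty support by a direct look at the stabilized denominator. In self-attention the receiver and source index sets coincide. Inserting a zero carrier \(h_0=0\) therefore does two things at once: it adds a new source column and a new receiver row. The old–old visibility bits and all old carriers stay fixed, as the compatible-metadata convention requires.

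\textbf{Old outputs.} For each pre-existing receiver \(i\), its carrier \(h_i\) and hence its factor \(p_i\) are unchanged. Its projections \(q_i,k_j,v_j\) for old \(j\) are also unchanged, since \(W_Q,W_K,W_V\) act tokenwise. The new column has \(p_0=\presence_\tau(0)=0\), so \(u_{i0}=m_{i0}\cdot 0\cdot\exp(s_{i0})=0\), whatever the bit \(m_{i0}\) is. Here \(\exp(s_{i0})=\exp(0)=1\) is finite, because \(k_0=W_K0=0\). This is exactly the hidden-carrier null-source insertion proposition. Every old numerator and old partition is therefore unchanged, and so is every old weight and every old output.

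\textbf{New receiver.} The output of the new row is \(p_0\) times a finite aggregate, so it vanishes by the hidden-carrier null receiver proposition. To keep this clean, I will check that the aggregate is finite. Its denominator is at least \(\varepsilon_{\mathrm{den}}>0\), and its numerator is a finite sum of finite terms. So the product is \(0\cdot(\text{finite})=0\), not an indeterminate form.

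\textbf{Empty support.} I will define effective support for receiver \(i\) as the set of \(j\) with \(m_{ij}p_j\neq 0\). This set is empty when the source set is empty, when the row is fully masked, or when every visible source is null. In each case \(u_{ij}=0\) for all \(j\), so every \(w_{ij}=0/\varepsilon_{\mathrm{den}}=0\). The aggregate is \(0/\varepsilon_{\mathrm{den}}=0\), and hence the output \(p_i\cdot 0=0\) is finite.

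The only real obstacle is bookkeeping rather than analysis. The step to make explicit is that \(\varepsilon_{\mathrm{den}}>0\) keeps every denominator bounded away from zero, and that the scores \(s_{ij}\) are finite for finite carriers. Without these two facts, the ``\(0\cdot\)anything'' steps could hide a \(0/0\) or a \(0\cdot\infty\). I would state both facts at the outset and then invoke the two earlier propositions directly.
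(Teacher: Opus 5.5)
Your proposal is correct and follows the paper's own proof. Old outputs are unchanged by hidden-carrier null-source insertion (\(p_0=0\) forces \(u_{i0}=0\)), the new row vanishes by the null-receiver proposition, and empty support gives \(u_{ij}=0\), so the positive stabilizer \(\varepsilon_{\mathrm{den}}\) makes every weight and output zero; your added finiteness checks (finite scores, denominator at least \(\varepsilon_{\mathrm{den}}\)) just make explicit what the paper's short argument leaves implicit.
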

\begin{proof}
The old-output statement follows from null-source insertion, and the new
output follows from the null-receiver proposition. With empty support, every
\(u_{ij}=0\); the positive stabilizer makes each weight zero.
\end{proof}

\begin{remark}[Projection-boundary variation]
Presence can instead be evaluated after Q/K projection, separately by role and
head. That construction satisfies an analogous contract at a different
boundary but is not a notation variant of Eq.~\eqref{eq:oattention}: a nonzero
hidden carrier may project to zero in one head. We keep this variation, its
formula, and its independent evidence in
\Cref{app:qkv-reference-form,app:qkv-reference-results}; the canonical argument
below always refers to token-level hidden-carrier presence.
\end{remark}

\paragraph{Comparison with score-space no-op attention.}
At a bias-free dot-product boundary, ReLA makes a zero query or key inert in its
rectified-score core and can produce an all-zero aggregate; Softpick also
permits zero aggregate updates. Their source selection is determined by the
sign of each query--key score and changes the score-normalization regime. In
OAttention, source presence is carried by \(h_j\), is independent of which
receiver reads that source, and modulates an otherwise unchanged exponential
competition. We therefore treat ReLA as the closest score-space no-op
precedent, not as the same operator or as evidence of poor general performance.

\subsection{Vanilla-consistent active-state limit}

For a fixed finite collection of nonzero hidden carriers,
\(p_i,p_j\to1\) as \(\tau\to0^+\). If the ordinary softmax partition is
positive, taking \(\varepsilon_{\mathrm{den}}\to0^+\) yields
\begin{equation}
  \OA_i^H(H;m)\longrightarrow
  \sum_j
  \frac{m_{ij}\exp(s_{ij})}{\sum_tm_{it}\exp(s_{it})}v_j.
  \label{eq:vanilla-limit}
\end{equation}
At finite \(\tau\), active states can still be attenuated; the limit does not
imply exact equality at a chosen finite scale.

\paragraph{Comparison with spare-mass normalizers.}
Softmax-plus-one adds a fixed non-data term to the partition and therefore
shrinks an ordinary active row by a factor of the form \(Z/(Z+\lambda)\).
StableMask changes the causal visibility construction and introduces
pseudo-attention values. By contrast, \(\varepsilon_{\mathrm{den}}\) in
Eq.~\eqref{eq:oattention} only totalizes empty support and vanishes in the
active-state limit; a zero source is removed through token presence rather than
represented by a competing sink.

\section{From OAttention to O-closed token dynamics}
\label{sec:o-closure}

Within OAttention, relation-level visibility and state-level presence play
different roles. The mask selects permitted query--source edges; receiver
presence gates the emitted output, while source presence enters the normalized
support along otherwise visible edges. These are not attention-specific gates:
they are two placements of one operator-independent notion of token
participation. A token participates both by changing its own state and by
contributing mass to computations shared with other tokens.

Attention alone cannot establish this semantics at the model level. A biased
FFN or additive encoding can map zero to a nonzero state, while a cross-token
statistic can allow an inserted zero to alter existing outputs. A model-level
zero element therefore requires both zero absorption in token-local maps and
zero support mass in token-coupled maps.

\subsection{One token presence, two operator roles}

At the current hidden-carrier boundary, define once
\begin{equation}
  p_i=\presence_\tau(h_i).
  \label{eq:role-presence}
\end{equation}
Every O-component consuming this boundary state uses that coefficient. A
token-local branch uses \(p_i\) to control emitted update; a coupled component
uses the source coefficients \(p_j\) to define its support measure. Presence
is recomputed when token dynamics produce a new hidden carrier at the next
declared boundary, but it is not recomputed separately from Q/K projections
inside canonical OAttention.

\subsection{Zero extension}

Let \(H_I=(h_i)_{i\in I}\) be a finite indexed token family. For a new index
\(j\notin I\), let \(\iota_jH_I=H_I\oplus_j0\) denote extension by an exact
zero token. Metadata are called \emph{compatible} when extension leaves the
metadata of all old indices unchanged; in particular, an insertion operation
that renumbers absolute positions is not compatible under this definition.

\begin{definition}[O-Closure]
A family of token operators \(M=(M_I)_I\) is O-closed if, for every finite
\(I\), every \(j\notin I\), and compatible metadata \(c,c^+\),
\begin{equation}
  M_{I\cup\{j\}}(\iota_jH_I;c^+)
  =\iota_j M_I(H_I;c).
  \label{eq:o-closure}
\end{equation}
\end{definition}

The inserted coordinate on the right-hand side is zero, expressing
\emph{state-zero absorption}. Equality on the old coordinates expresses
\emph{contextual erasure}: adding a zero source has the same operator effect as
removing it from the relevant support. The two statements are complementary.
The zero state is absorbing at its own position and neutral when viewed as
context by other positions.

For deterministic modules, Eq.~\eqref{eq:o-closure} is the compact
zero-extension law. A stochastic version can be stated for the conditional
kernel: the inserted state has conditional law \(\delta_0\), and the joint law
of the old states is unchanged. This explains the token-dynamics reading
without conflating a full NULL state with a query placeholder that must be
predicted from context.

\subsection{Local and support-coupled lifts}

For a token-local map, a gated branch has the form
\begin{equation}
  \mathcal{G}_O[F](h_i)
  =p_iF(h_i).
  \label{eq:branch-lift}
\end{equation}
If \(F(0)\) is finite, this branch emits zero at a null input even when \(F\)
contains a bias. A residual lift is
\begin{equation}
  \mathcal{R}_O[F](h_i)
  =h_i+\presence_\tau(h_i)F(h_i).
  \label{eq:residual-lift}
\end{equation}

Output gating alone is insufficient for a token-coupled operator: a zero
source may still alter a shared denominator or statistic. Source presence must
enter the support measure. Retaining token identity and state as marks, define
\begin{equation}
  \nu_H^O=\sum_{i\in I}p_i\,\delta_{(i,h_i)}.
  \label{eq:o-measure}
\end{equation}
OAttention is a query-dependent exponential tilt of this same token measure,
\begin{equation}
  \OA_i^H
  =p_i
  \frac{\sum_jm_{ij}p_j e^{s_{ij}}v_j}
       {\varepsilon_{\mathrm{den}}+\sum_jm_{ij}p_j e^{s_{ij}}}.
  \label{eq:oattention-measure}
\end{equation}
The receiver use and source-support use are two placements of the same
token-level presence, not separately inferred attention gates.

\subsection{Residual and compositional closure}

\begin{theorem}[Closure]
Let \(M\) and \(N\) satisfy Eq.~\eqref{eq:o-closure} on compatible token
families. Then \(N\circ M\) is O-closed. If an update field \(U\) is O-closed,
then the residual map \(I+U\) is O-closed.
\end{theorem}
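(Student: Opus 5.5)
The plan is to verify Eq.~\eqref{eq:o-closure} directly for each construction. The argument is purely algebraic: it chains the zero-extension equalities. For every finite index set \(I\), every fresh index \(j\notin I\), and compatible metadata \(c,c^+\), I will show that the composite or residual map sends \(\iota_jH_I\) to \(\iota_j\) applied to its output on \(H_I\).

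\textbf{Composition.} First apply O-closure of \(M\):
\[
M_{I\cup\{j\}}(\iota_jH_I;c^+)=\iota_jM_I(H_I;c).
\]
The right-hand side is again a zero extension by \(j\), now of the family \(M_I(H_I;c)\) indexed by \(I\). So O-closure of \(N\) applies with input family \(M_I(H_I;c)\):
\[
N_{I\cup\{j\}}\bigl(\iota_jM_I(H_I;c);c^+\bigr)=\iota_jN_I\bigl(M_I(H_I;c);c\bigr).
\]
Chaining the two equalities gives Eq.~\eqref{eq:o-closure} for \(N\circ M\). The step needing care is metadata. The statement's hypothesis ``on compatible token families'' requires that the metadata passed to \(N\) after \(M\) is still a compatible pair. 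In particular, \(M\) must not renumber old indices or alter their metadata. I will state this explicitly: either metadata are passed through unchanged, or each layer's metadata update is itself compatible with extension. This is the only point that requires thought.

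\textbf{Residual.} Write \((I+U)_I(H_I;c)=H_I+U_I(H_I;c)\), computed coordinatewise. On the extended family, the identity part is \(\iota_jH_I\). By hypothesis, the update part is \(U_{I\cup\{j\}}(\iota_jH_I;c^+)=\iota_jU_I(H_I;c)\). Zero extension is linear over a common index set: \(\iota_jA+\iota_jB=\iota_j(A+B)\), since coordinates in \(I\) add as before and coordinate \(j\) is \(0+0=0\). Hence the sum equals \(\iota_j(H_I+U_I(H_I;c))\), which is the claim.

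\textbf{Main obstacle.} The hardest point is bookkeeping rather than analysis. One must make precise that each zero extension in the chain uses the same fresh index \(j\) and a compatible metadata pair at every stage. Then each application of the hypothesis is legitimate, and no position renumbering enters between \(M\) and \(N\).
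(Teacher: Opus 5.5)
Your proof is correct and takes the same approach as the paper. For composition you chain the two zero-extension equalities, and for the residual map you use the same coordinatewise observation: the old coordinates agree and the inserted coordinate is \(0+0=0\). You also state explicitly that the metadata passed from \(M\) to \(N\) must stay a compatible pair, which the paper's one-line proof leaves implicit in the phrase ``on compatible token families''; that clarification is worth keeping.
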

\begin{proof}
For composition,
\[
  (N\circ M)(H\oplus_j0)
  =N(M(H)\oplus_j0)
  =N(M(H))\oplus_j0.
\]
For the residual map, the inserted coordinate is \(0+0=0\), and both the
identity and update fields agree on every old coordinate before and after zero
extension.
\end{proof}

The theorem is conditional on complete module closure. If an update is
context-coupled, multiplying only its output at position \(i\) can establish
state absorption but not contextual erasure; its aggregation must also be
compatible with the presence-weighted support in Eq.~\eqref{eq:o-measure}.

\subsection{O-component realizations}
\label{sec:o-components}

\subsubsection{OFFN, ONorm, and OInject}

Let \(h_i\) be the state entering the declared component boundary and let
\(p_i=\presence_\tau(h_i)\).

For a feed-forward branch, including a branch with internal biases, define
\begin{equation}
  \operatorname{OFFN}_{\mathrm{upd}}(h_i)
  =p_i\operatorname{FFN}(h_i),
  \qquad
  h_i^+=h_i+\operatorname{OFFN}_{\mathrm{upd}}(h_i).
  \label{eq:offn}
\end{equation}
The notation distinguishes the gated update from the complete residual
transformation.

For token-local normalization \(N\), such as LayerNorm over channels, define
\begin{equation}
  \operatorname{ONorm}(h_i)=p_iN(h_i).
  \label{eq:onorm}
\end{equation}
This is a gated replacement or branch map, not a cross-token statistic.

For additive position, schema, or label metadata \(e_i\), we use
\begin{equation}
  \operatorname{OInject}(h_i,e_i)=h_i+p_ie_i.
  \label{eq:oinject}
\end{equation}
Only the newly injected branch is gated; the carrier is retained. The
alternative \(p_i(h_i+e_i)\) contracts the carrier itself and may be useful in
an attractor-oriented design, but it is not used in the evaluated
OTransformer.

Presence is recomputed from the current state at every declared boundary.
Consequently, an exact zero produced by one component is absorbed by the next.
The same convention can deactivate an active representation that is mapped
exactly to the origin, which motivates the collapse diagnostics and
limitations in \Cref{sec:limitations}.

\subsubsection{OStandardize}

Standardization across a token or cell axis is support-coupled. For
\(p_i=\presence_\tau(h_i)\), \(S=\sum_ip_i\), and \(S>0\), define
\begin{align}
  \mu_O
    &=\frac{\sum_ip_ih_i}{S}, \\
  v_O
    &=\frac{\sum_ip_i(h_i-\mu_O)^{\odot2}}{S}, \\
  z_i^O
    &=p_i\left[
      \gamma\odot\frac{h_i-\mu_O}
      {\sqrt{v_O+\varepsilon_{\mathrm{var}}}}+\beta
    \right].
  \label{eq:ostandardize}
\end{align}
The definition is totalized at empty effective support by
\begin{equation}
  S=0
  \quad\Longrightarrow\quad
  \mu_O=0,\qquad v_O=0,\qquad z_i^O=0\ \text{for all }i.
  \label{eq:ostandardize-empty}
\end{equation}
The implementation realizes this branch with a safe positive divisor followed
by explicit zero selection. An inserted zero token has \(p=0\), so it changes neither \(S\),
\(\mu_O\), nor \(v_O\), leaves all old outputs unchanged, and receives a zero
output. The variance uses the presence-weighted mean of squared residuals;
applying the radial map to \(\mu\) or \(\sigma\) itself would not establish
this invariance.

OStandardize and ONorm have different topologies. ONorm is token-local and
gates its output. OStandardize couples positions and must alter both the
statistical numerator and its normalizer. The final multiplication by \(p_i\)
means that we claim insertion-invariant weighted moments and zero null output,
not exact unit variance of the final gated activations under the same measure.

\subsubsection{OTransformer by compositional closure}

An OTransformer is a token architecture in which every declared token-local
path is zero absorbing and every declared support-coupled path assigns zero
mass to zero states. Its canonical attention component is hidden-carrier
OAttention, so the same \(p_i\) used by local components also controls
attention participation. The evaluated two-block instantiation uses
canonical hidden-carrier OAttention together with OInject, ONorm, and OFFN.
It tests compositional zero closure for that declared host path; task metrics
remain descriptive rather than a universal no-loss claim.
OStandardize is evaluated independently and is not part of that host.

Under compatible tokenization and metadata, applying the closure theorem
through \(L\) blocks gives
\begin{equation}
  \operatorname{OTransformer}_L(H\oplus_j0)
  =\operatorname{OTransformer}_L(H)\oplus_j0.
  \label{eq:otransformer-closure}
\end{equation}
For another architecture, this conclusion additionally requires its routing,
pooling, compression, caching, and token-level readout paths to satisfy the
same law. It does not imply that a biased task head vanishes on an all-null
input or that every missing value should be represented by zero.

\subsection{Validity domain and host-model conditions}
\label{sec:implementation}

The canonical contract begins at the declared hidden-carrier boundary: one
presence value is shared across heads for each token. The implementation
explicitly totalizes zero-length and fully masked support. Precision,
mask-layout, grouped-query broadcasting, and API details are reported in the
supplement.

Insertion invariance is exact in real arithmetic. Dense floating-point
reduction order can change when columns are inserted, so equality of old
outputs is evaluated with dtype-appropriate tolerances. Source weights and
receiver outputs at the declared exact-zero boundary remain exactly zero. The
deterministic operator is defined before dropout; a
training kernel that changes random-number indexing after insertion must state
whether it promises pathwise coupling or only distributional equivalence.

The current standalone wrapper uses bias-free Q/K/V and output projections.
Canonical source and receiver gating does not require a hidden zero to remain
zero inside Q/K/V projection, but its returned update still requires a
zero-preserving output projection or a final carrier gate. This is sufficient
for the attention primitive, not for an arbitrary host model.
Additive metadata, affine normalization, biased branches, routing, pooling,
and readout can all violate Eq.~\eqref{eq:o-closure}. The O-component wrappers
preserve biased local modules by gating the newly generated branch, but every
support-coupled path still requires its own presence-aware construction.

The structural NULL also remains distinct from data semantics. An observed
numeric zero may be informative and should normally receive an active carrier.
A learned \texttt{[MASK]} embedding is an information-bearing role token. A
missingness indicator can itself be predictive. Mapping NaN or missing values
to the origin is appropriate only when the application declares their carrier
to be semantically inert; OAttention does not make that declaration.

\section{Evaluation}
\label{sec:evaluation}

We align each experiment with one of three claims:
\begin{enumerate}[leftmargin=1.6em,itemsep=0.2em]
  \item \textbf{Exactness:} does the canonical implementation realize the
        declared null and insertion contracts at finite precision?
  \item \textbf{Active-path compatibility:} can token presence be introduced
        into a frozen pretrained host as a calibrated near-identity change?
  \item \textbf{Compositional necessity:} does attention alone fail when
        ordinary host components reactivate NULL, and does the declared
        O-closed path repair that failure?
\end{enumerate}
No superiority objective or post-hoc equivalence margin is imposed. Task
metrics are descriptive matched comparisons; the primary algebraic outcomes
are null-state preservation and inert insertion.

\subsection{Exact operator contracts}

An independent DGX2 operator sweep evaluates the implementation of
Eq.~\eqref{eq:oattention}. It covers fp32, bf16, and fp16; ordinary multi-head
and grouped-query layouts; self- and cross-attention; empty and all-null
support; finite gradients; and insertion at multiple positions.

\begin{table}[H]
  \centering
  \small
  \caption{Independent hidden-carrier OAttention operator evaluation. Errors are
  maxima on an NVIDIA GB10 GPU.}
  \label{tab:hidden-carrier-results}
  \begin{tabularx}{\linewidth}{@{}lX@{}}
    \toprule
    Property & Result \\
    \midrule
    Canonical equation & Output and weight \(L_\infty\) error
      \(\le 8.94\times10^{-8}\) against direct evaluation \\
    Null insertion & Old-output \(L_\infty\le4.47\times10^{-8}\);
      old-weight \(L_\infty\le5.96\times10^{-8}\) \\
    Exact null boundary & Inserted receiver output and source weight: \(0\) \\
    Empty/all-null support & Output and weights finite exact zero \\
    Interfaces & 6/6 dtype--head cases finite; MHA, GQA, fp32, bf16, fp16,
      cross-attention, and tested gradients \\
    \bottomrule
  \end{tabularx}
\end{table}

The propositions establish the algebraic contract; this evaluation checks that
the implementation realizes it at finite precision.

The standalone OStandardize sweep checks the other support-coupled topology:
presence-weighted moments and old outputs are invariant to tested zero
insertions, inserted outputs are exact zero, and all tested support and dtype
cases are finite. Full tensor shapes, permutation checks, singleton centering,
and the selected next-boundary semantics are reported in the supplement. This
operator is not folded into the task host.

\subsection{Pretrained active-path compatibility: TabPFN v3}
\label{sec:tabpfn-v3-results}

To test whether token presence can be introduced without retraining or
redesigning the active computation, we cloned the pretrained TabPFN v3
regressor and left its learned weights and preprocessing unchanged. The
baseline, hidden-carrier OAttention, and Full-O instances each use one copy of
the same checkpoint. Full-O adds hidden-carrier OAttention,
ONorm, OFFN, and hidden-carrier gating of the two target-encoding additions in
the v3 forward path; OStandardize is deliberately not inserted into the
preprocessing pipeline. We evaluate six cached OpenML regression datasets
(IDs 560, 44959, 505, 507, 227, and 189), three seeds \(11,23,37\), one
estimator, and deterministic caps of 512 training and 256 test rows. No arm is
fine-tuned.

The main comparison uses \(\tau=10^{-8}\), a near-identity active-path
calibration that nevertheless maps an exact zero carrier to exactly zero. Across
the 18 dataset--seed pairs, hidden-carrier OAttention changes mean RMSE by
$+0.088\%$ and mean $R^2$ by $+8.4\times10^{-5}$; Full-O changes them by
$+0.177\%$ and $+1.2\times10^{-5}$, respectively. These are bounded
inference observations, not a universal non-inferiority test. This is the
principal model-level evidence for minimal perturbation; the adapter and
OTransformer studies serve as complementary boundary and composition checks.

\begin{table}[t]
  \centering
  \scriptsize
  \caption{Near-identity hidden-carrier retrofit of pretrained TabPFN v3
  (\(\tau=10^{-8}\)). Values are means over three seeds. RMSE is in the
  original target units; $R^2$ is computed on each test split.}
  \label{tab:tabpfn-v3-near-identity}
  \begin{tabular}{@{}lrrrrrr@{}}
    \toprule
    Dataset & Base RMSE & OA RMSE & Full-O RMSE & Base $R^2$ & OA $R^2$ & Full-O $R^2$ \\
    \midrule
    bodyfat & 1.231549 & 1.230964 & 1.232571 & .961179 & .961228 & .961112 \\
    concrete & 4.127869 & 4.133669 & 4.129971 & .936922 & .936758 & .936862 \\
    cpu\_small & 3.023569 & 3.022008 & 3.023061 & .971290 & .971319 & .971299 \\
    kin8nm & .091274 & .091292 & .091278 & .877050 & .877005 & .877038 \\
    space\_ga & .093484 & .093366 & .093445 & .740970 & .741612 & .741182 \\
    tecator & .422813 & .424202 & .424954 & .999087 & .999082 & .999078 \\
    \bottomrule
  \end{tabular}
\end{table}

The corresponding finite-gate stress test is reported in
\Cref{app:tabpfn-v3-stress}. At \(\tau=1\), OAttention alone remains close to
the baseline, whereas repeatedly gating every local branch produces sizeable
active-path changes in Full-O. This separation is useful evidence about scale
calibration, not evidence against the zero-extension contract.

\paragraph{Complementary adapter and insertion checks.}

These checks calibrate the extension without carrying the main empirical
argument. A seven-task, 21-pair adapter rerun shows mixed, small active-path
metric changes (\Cref{app:adapter-table}). In 15 task--seed test-time insertion
pairs, hidden-carrier OAttention preserved every appended NULL state exactly,
whereas standard attention activated the new positions and produced larger
prediction shifts (\Cref{app:insertion-check}). A separate 120-fit
training-time zero-column study gives the same bounded contrast
(\Cref{app:zero-column-table}). These results are supporting diagnostics, not
a claim of universal task-level non-inferiority.

\subsection{Why attention alone is insufficient: OTransformer closure}

The OTransformer study uses canonical hidden-carrier OAttention in a two-block,
32-dimensional, four-head feature-token host on the same five datasets and
seeds, with \(k\in\{0,1,4\}\) inserted zeros during training. The Standard,
OA-only, and OTransformer arms share exact initialization. A separately
initialized zero-preserving structural control is included only as an
architectural reference. The OA-only arm isolates the failure of
attention-level closure to repair ordinary additive, affine, and biased paths.

\begin{table}[H]
  \centering
  \small
  \caption{Scoped OTransformer study. Maxima are over the evaluated
  task--seed--zero-count matrix for \(k>0\). The structural control is not part
  of the initialization-matched three-arm comparison.}
  \label{tab:otransformer-results}
  \begin{tabularx}{\linewidth}{@{}lXXX@{}}
    \toprule
    Arm & Host components & Max inserted-state \(L_\infty\) & Interpretation \\
    \midrule
    Standard & Softmax, additive encoding, affine norm, biased FFN & 7.982 & Ordinary host \\
    OA-only & Hidden-carrier OA; other components ordinary & 7.983 & Attention alone is insufficient \\
    OTransformer & Hidden-carrier OA, OInject, ONorm, OFFN & 0 & Declared O-closed path \\
    Structural control & Hidden-carrier OA, no additive encoding, affine-free norm, bias-free FFN & 0 & Separately initialized comparator \\
    \bottomrule
  \end{tabularx}
\end{table}

All 180 fits were finite. In the OTransformer arm, the maximum old-token state
\(L_\infty\) shift was \(2.47\times10^{-3}\) and the maximum prediction
\(L_\infty\) shift
was \(2.08\times10^{-3}\). These are finite-precision diagnostics in one host,
not exact empirical equality or a universal task-level guarantee.

\section{Limitations and open questions}
\label{sec:limitations}

\paragraph{Presence is an operational convention.}
A learned norm is not guaranteed to represent epistemic absence. Useful weak
signals may have low magnitude, and optimization may avoid the origin. The
radial map supplies a differentiable route to lower participation, but it does
not prove that training discovers or uses a null state. Any zero-attractor
claim requires longitudinal measurements of norms, presence, and transition
dynamics under an explicit learning objective.

\paragraph{Finite-scale attenuation.}
At finite \(\tau\), a gate can attenuate weak nonzero states, and repeated
gates may compound this effect. This is a deliberate operational trade-off:
the exact origin remains a mathematical NULL state, while ordinary active
states should stay close to the standard path when \(\tau\) is below their
characteristic norm scale. The intended regime is therefore not to suppress
weak information indiscriminately, but to introduce an exact null token while
preserving active computation as closely as possible. Wider studies should
still examine task-dependent calibration and weak-feature survival.

\paragraph{Exact-zero collisions.}
Cross-token centering can map an active token exactly to zero---for example, a
singleton support or a token equal to the weighted mean. Under the selected
layerwise semantics, the next component treats that representation as NULL.
This is a genuine consequence rather than a numerical corner case. A system
that must preserve semantic activity across centering should carry an explicit
pre-standardization presence side channel instead of re-inferring it from the
centered vector.

\paragraph{Host scope and metadata.}
The whole-model theorem is conditional on compatible metadata and closure of
every path. Absolute-position renumbering, lossy token merging, caches,
sequence-level normalization, or a non-neutral readout can break zero
extension. The current OTransformer result covers one feature-token host and
does not include OStandardize. The separate TabPFN v3 retrofit evaluates the
hidden-carrier realization on one pretrained host without fine-tuning; it does
not establish arbitrary-host safety or universal task-level no-loss.

\paragraph{Missingness.}
Observed zero, missing, NaN, learned mask states, and query placeholders have
different semantics. Mapping missing data to an exact-zero carrier is a
promising option only when missingness is declared uninformative. Informative
missingness should retain a nonzero side channel. Establishing when either
choice is appropriate is outside the present experiments.

\section{Conclusion}
\label{sec:conclusion}

Attention masks answer a relation-level question: which query--source
interactions are permitted. Token presence answers the complementary
state-level question of whether a representation participates at all. We
assign every hidden carrier one coefficient
\(p_i=\presence_\tau(h_i)\). Its receiver use gates newly emitted
information; its source use controls mass in context support. OAttention is
the nonlocal attention realization of these two placements, retaining the
standard score, visibility relation, exponential competition, and value
aggregation while making the origin a zero element.

This single coefficient also organizes the rest of token dynamics. Local
updates yield OFFN, ONorm, and OInject; support-weighted moments yield
OStandardize; and the O-Closure law propagates null insertion through residual
updates and composition. The components are therefore consequences of one
token-participation variable rather than a collection of analogous gates.

The experiments address three bounded questions. Operator checks reproduce the
canonical equation and null contracts at finite precision; a cloned pretrained
TabPFN v3 host exhibits a calibrated near-identity active path without
fine-tuning; and a two-block negative control shows why OAttention alone is
insufficient when ordinary host components reactivate NULL. These results do
not establish arbitrary-host safety, universal task-level non-regression,
learned zero-attractor dynamics, or a general missing-value semantics. Whether
the construction preserves weak active signals across model families, induces
useful zero-attractor dynamics, or provides an appropriate semantics for
missing values remains open.

\paragraph{Reproducibility.}
The accompanying artifact contains the source package, tests, experiment
runners, configurations, and machine-readable results. Exact reproduction
commands, environment information, and CPU/CUDA provenance are reported in
\Cref{app:artifacts}.

\bibliographystyle{plainnat}
\bibliography{references}

\clearpage
\appendix

\section{Additional mathematical details}
\label{app:mathematical-details}

\subsection{Projection-boundary design variation}
\label{app:qkv-reference-form}

Presence may be evaluated after projection rather than on the shared hidden
carrier. For head $a$, define
\begin{equation}
  p_{ia}^{Q}=\presence_\tau(q_{ia}),\qquad
  p_{ja}^{K}=\presence_\tau(k_{ja}),
  \qquad
  \OA_{ia}^{QK}
  =p_{ia}^{Q}
   \frac{\sum_jm_{ij}p_{ja}^{K}e^{s_{ija}}v_{ja}}
        {\varepsilon_{\mathrm{den}}+
         \sum_jm_{ij}p_{ja}^{K}e^{s_{ija}}}.
  \label{eq:qkv-reference}
\end{equation}
This form satisfies null-query and null-source contracts at the projected
Q/K/V boundary. It differs from canonical hidden-carrier OAttention whenever
a nonzero $h_i$ projects to $q_{ia}=0$ or $k_{ia}=0$ in one head: the
projection-boundary form suppresses that role in that head, whereas the
canonical form retains the shared token presence. Under zero-preserving
projections, both forms preserve a hidden zero. They are therefore related
designs with different declared carrier boundaries, not notation variants.

\subsection{Conditional-kernel interpretation}

For a token transition kernel
\(K_I(dH'\mid H,c)\), the deterministic zero-extension law has a direct
probabilistic analogue. After adjoining a null index \(j\),
\begin{equation}
  K_{I\cup\{j\}}
  \bigl(dH'_I,dh'_j\mid H_I\oplus_j0,c^+\bigr)
  =K_I(dH'_I\mid H_I,c)\,\delta_0(dh'_j).
  \label{eq:kernel-closure}
\end{equation}
The first factor states that a null context token does not change the
conditional dynamics of old tokens. The second states that the null position
is absorbing. This factorization is stronger than saying only that the new
position has zero output.

A query placeholder has a different factorization. A full structural NULL has
one hidden-carrier presence (p=0). A read-only query hole instead keeps a
nonzero carrier (p>0) so its next state can depend on context, while the
caller masks its source column or otherwise excludes it from source support.
In the Q/K/V reference this distinction can also be expressed as
\((p^Q>0,p^K=0)\). A learned \texttt{[MASK]} token can be nonzero and
information-bearing. These states should not be identified merely because
they occupy the same table cell.

\subsection{Compatibility of metadata}

Equation~\eqref{eq:o-closure} compares an original indexed family with its
zero extension. The comparison requires metadata on old indices to remain
fixed. Appending a token to a table with fixed feature identifiers is
compatible. Inserting before a sequence position and then recomputing absolute
position indices may not be compatible because old carriers have changed.
OInject prevents an
exact-zero token from receiving additive metadata; it
does not make arbitrary re-indexing invariant.

\subsection{Why output gating is insufficient for coupled maps}

Let \(A_i(H)\) be a cross-token aggregate and consider
\(\widetilde A_i(H)=p_iA_i(H)\). If \(p_i=0\), the output at position \(i\)
vanishes. For an old active position \(r\), however,
\(\widetilde A_r(H\oplus0)=p_rA_r(H\oplus0)\) can differ from
\(p_rA_r(H)\). Ordinary softmax attention and unweighted standardization are
examples: the inserted zero changes a denominator or empirical moment. Source
presence must therefore enter the coupled support itself, as in
Eqs.~\eqref{eq:oattention-measure} and \eqref{eq:ostandardize}.

\section{Experimental protocols and reproducibility}
\label{app:experimental-protocols}

\subsection{Operator implementation details}

The package exposes two explicit classes. \texttt{HiddenCarrierOAttention}
computes one token presence from each hidden carrier and broadcasts it across
heads. \texttt{OAttention} retains the Q/K/V-boundary design variation,
computing query and key presence per head. Both paths compute norms, scores,
exponentials, and normalizers in fp32, then cast outputs and reported weights
to the caller dtype. They branch on a zero-length key axis and map fully masked
rows to zero. Boolean masks use \texttt{True} for visibility; additive masks
map excluded edges to \(-\infty\) before exponentiation. Under grouped-query
attention, the canonical source coefficient is broadcast across expanded
query heads, while the reference key coefficient follows the KV-head repeat.

\subsection{Datasets, splits, and metrics}

All real-data experiments use datasets distributed with scikit-learn. We
split rows into 60\% training, 20\% validation, and 20\% test subsets;
classification splits are stratified. Feature standardization is fitted on
the training split and applied unchanged to validation and test. For Diabetes,
the target transform is also fitted on training targets, and RMSE and MAE are
reported after inversion to original units.

\begin{table}[H]
  \centering
  \small
  \caption{Real datasets used in the learned-model evaluations.}
  \label{tab:datasets}
  \begin{tabular}{@{}lrrrl@{}}
    \toprule
    Dataset & Rows & Features & Classes & Task \\
    \midrule
    Iris & 150 & 4 & 3 & classification \\
    Wine & 178 & 13 & 3 & classification \\
    Breast Cancer Wisconsin & 569 & 30 & 2 & classification \\
    Digits & 1,797 & 64 & 10 & classification \\
    Diabetes & 442 & 10 & -- & regression \\
    \bottomrule
  \end{tabular}
\end{table}

Classification metrics are accuracy, balanced accuracy, and cross-entropy.
Regression metrics are RMSE and MAE. Unless stated otherwise, learned-model
comparisons use seeds \(11,23,37\). The learned-model experiments are small,
controlled architecture studies rather than broad benchmarks.

\subsection{Canonical hidden-carrier model configurations}

The minimal scalar-feature adapter maps a standardized scalar \(z_j\) to
\(h_j=z_jw_j\in\R^{64}\) without token bias. It applies one four-head
self-attention layer, a residual connection, mean pooling over feature tokens,
and a task head. The standard and hidden-carrier arms share initialization,
optimizer, split, and training budget. OAttention uses \(\tau=10^{-6}\). The
projection-boundary experiments use the same host protocols but evaluate a
different declared carrier boundary.

The test-time insertion experiment uses a two-block, 32-dimensional,
four-head FT-style host with a learned CLS readout. Exact zero tokens are
appended after the ordinary feature and CLS carriers. The insertion comparison
is made within the trained model between the original and extended sequence.
The reported prediction shift is therefore not a change in task loss.

The training-time zero-column study uses the same two-block hidden size and
head count, a bias-free feature tokenizer, affine-free LayerNorm, bias-free
FFN, and learned CLS readout. Exact zero columns are added after train-fitted
preprocessing to training, validation, and test sets. Standard and
hidden-carrier OAttention arms use the same initial state for each task, seed,
and zero count.

\subsection{Pretrained TabPFN v3 retrofit protocol}

The hidden-carrier host comparison uses an independent copy of the pretrained
TabPFN v3 regressor checkpoint (vendor package 8.0.1) on an NVIDIA GB10 GPU.
The baseline and both O arms are separate model instances loaded from the same
checkpoint; no parameter is fine-tuned. The OAttention arm replaces attention
participation at the feature-distribution, column-aggregation, and ICL
boundaries. Full-O additionally applies ONorm, OFFN, and hidden-carrier gates to
the two target-encoding additions in the v3 forward path. The upstream
preprocessing standardizer remains unchanged, so this is not an
OStandardize task experiment.

The cached OpenML data IDs are 560 (bodyfat), 44959
(concrete\_compressive\_strength), 505 (tecator), 507 (space\_ga), 227
(cpu\_small), and 189 (kin8nm). We use seeds $11,23,37$, one estimator per
arm, and deterministic caps of 512 training and 256 test rows for larger
datasets. RMSE is measured in original target units and $R^2$ is computed on
each held-out test split. The main table uses $\tau=10^{-8}$ as a
near-identity active-path control. A separate $\tau=1$ stress matrix is
reported below; it is not pooled into the main table.

\subsection{OStandardize protocol}

The machine-readable OStandardize sweep fixes \(\tau=10^{-6}\),
\(\varepsilon_{\mathrm{var}}=10^{-6}\), and token axis 1. It evaluates fp32
shapes \((2,5,4)\), \((1,7,8)\), and \((3,4,16)\), together with bf16 and
fp16 cases on \((2,5,4)\). Each case checks:
\begin{itemize}[leftmargin=1.5em,itemsep=0.15em]
  \item preservation of old support, mean, variance, and outputs under one
        zero inserted at the tested boundary and interior positions;
  \item exact-zero output for the inserted positions;
  \item permutation equivariance and finite gradients;
  \item all-null, zero-length, and singleton active support.
\end{itemize}
Constant active support is covered separately by the component unit tests.
The singleton case also verifies the selected layerwise semantics: centering
produces an exact-zero state, and a subsequent biased OFFN branch emits zero
when presence is recomputed from that state.

\subsection{OTransformer protocol}

The OTransformer experiment uses two blocks, hidden dimension 32, four heads,
FFN dimension 128, 100 epochs, learning rate \(10^{-3}\), weight decay
\(10^{-4}\), and \(\tau=\varepsilon_{\mathrm{den}}=10^{-6}\). Exact zero
tokens are inserted after feature tokenization and before position encoding
and Transformer blocks. The factorial comprises five datasets, three seeds,
three training zero counts \(k\in\{0,1,4\}\), and four arms, for 180 fits.

\begin{table}[H]
  \centering
  \small
  \caption{Arms in the scoped OTransformer experiment. The first three share
  exact initialization; the structural control is initialized separately.}
  \label{tab:otransformer-arms}
  \begin{tabularx}{\linewidth}{@{}lXXX@{}}
    \toprule
    Arm & Attention & Encoding and normalization & FFN \\
    \midrule
    Standard & softmax & additive encoding; affine LayerNorm & biased \\
    OA-only & hidden-carrier OAttention & additive encoding; affine LayerNorm & biased \\
    OTransformer & hidden-carrier OAttention & OInject; ONorm-wrapped affine LayerNorm & OFFN-wrapped biased FFN \\
    Structural control & hidden-carrier OAttention & no additive encoding; affine-free LayerNorm & bias-free, O-gated \\
    \bottomrule
  \end{tabularx}
\end{table}

The primary diagnostics are per-layer inserted-state norm, old-token and CLS
state shift, prediction shift, norm and presence quantiles, exact/near-zero
fractions, and finiteness. The task head is outside the token-dynamics theorem;
a bias in the head can produce a nonzero all-null prediction even when every
token state remains zero.

\section{Additional empirical results}

\subsection{Q/K/V-boundary reference operator properties}
\label{app:qkv-reference-results}

A separate projection-boundary study covers five random seeds and three
\((H,Q,K,d)\) configurations, for 15 cases at
\(\tau=\varepsilon_{\mathrm{den}}=10^{-6}\). It evaluates exact and fully
masked boundaries, insertion at the Q/K/V boundary, the active-state
comparison with softmax, near-origin scaling, grouped-query attention, masks,
bf16, and gradients.

\begin{table}[H]
  \centering
  \small
  \caption{Numerical verification of the Q/K/V-boundary design variation. The
  nonzero insertion errors are floating-point residuals on pre-existing
  outputs or weights.}
  \label{tab:operator-results}
  \begin{tabularx}{\linewidth}{@{}lX@{}}
    \toprule
    Property & Result \\
    \midrule
    Finiteness & 15/15 seed--shape cases finite \\
    Exact zero boundaries & Null query, empty support, and fully masked output and weights: \(0\) \\
    Null-source insertion & Old-output \(L_\infty\le2.4\times10^{-7}\); old-weight \(L_\infty\le6.0\times10^{-8}\) \\
    Self-attention insertion & Old-output \(L_\infty\le4.8\times10^{-7}\); inserted-query output \(0\) \\
    Near-origin behavior & Log--log output/query slope \(1.9997\) over scales \(10^{-7}\) to \(10^{-5}\) \\
    Active path & Relative output \(L_\infty\le3.7\times10^{-7}\) at the evaluated default scale \\
    Interfaces & GQA, boolean/additive masks, bf16, and tested gradients finite \\
    \bottomrule
  \end{tabularx}
\end{table}

The measured near-origin slope matches the quadratic expansion of
Eq.~\eqref{eq:presence}. The active-path difference is a finite-scale numerical
observation, while Eq.~\eqref{eq:vanilla-limit} supplies the asymptotic
statement. Because the presence boundary differs, these measurements do not
constitute evidence for the canonical hidden-carrier operator.

\subsection{Finite-gate stress test for the pretrained TabPFN v3 retrofit}
\label{app:tabpfn-v3-stress}

The main text uses $\tau=10^{-8}$ to isolate the minimal-change regime. To
make the calibration trade-off visible, we repeat the same 54 inference fits
with $\tau=1$. OAttention remains close to the baseline, but applying the
same gate to every local branch can materially alter a pretrained active path.
The entries below are means over seeds; positive RMSE deltas and negative
$R^2$ deltas indicate degradation.

\begin{table}[H]
  \centering
  \scriptsize
  \caption{Finite-$\tau$ stress matrix on the cloned pretrained TabPFN v3
  host. Deltas are relative to the unmodified checkpoint.}
  \label{tab:tabpfn-v3-stress}
  \begin{tabular}{@{}lrrrr@{}}
    \toprule
    Dataset & OA $\Delta$RMSE & Full-O $\Delta$RMSE & OA $\Delta R^2$ & Full-O $\Delta R^2$ \\
    \midrule
    bodyfat & +0.532\% & +95.354\% & +0.000028 & +0.005431 \\
    concrete & +0.162\% & +24.442\% & -0.000178 & -0.034037 \\
    cpu\_small & -0.054\% & +5.238\% & +0.000031 & -0.002766 \\
    kin8nm & -0.112\% & +39.797\% & +0.000277 & -0.117006 \\
    space\_ga & -0.126\% & +17.324\% & +0.000649 & -0.096142 \\
    tecator & +0.429\% & +100.143\% & -0.000008 & -0.002602 \\
    \midrule
    Mean over 18 cells & +0.138\% & +47.050\% & +0.000133 & -0.041187 \\
    \bottomrule
  \end{tabular}
\end{table}

The large Full-O changes are not a refutation of the zero-extension contract:
they are an active-path calibration result for a pretrained model that was not
trained with repeated gates. The near-identity matrix and this stress matrix
should therefore be read together.

\subsection{Canonical hidden-carrier adapter calibration}
\label{app:adapter-table}

The broad adapter receipt covers seven tasks and three matched seeds. The table
reports test means; positive regression deltas are worse because RMSE and MAE
are lower-is-better metrics. These numbers are included to expose the scale of
the active-path perturbation, not to rank the two attention mechanisms.

\begin{table}[H]
  \centering
  \scriptsize
  \caption{Canonical hidden-carrier adapter rerun. Deltas are hidden-carrier
  OAttention minus standard softmax.}
  \label{tab:adapter-calibration}
  \begin{tabular}{@{}lcrrr@{}}
    \toprule
    Task & Metric & Standard & Hidden-carrier OA & $\Delta$ \\
    \midrule
    Iris & accuracy & .9111 & .9111 & +.0000 \\
    Wine & accuracy & .9630 & .9630 & +.0000 \\
    Breast Cancer & accuracy & .9708 & .9708 & +.0000 \\
    Digits & accuracy & .8972 & .8954 & -.0019 \\
    Synthetic classification & accuracy & .6889 & .6900 & +.0011 \\
    Diabetes & RMSE & 54.6873 & 54.6782 & -.0092 \\
    Friedman1 & RMSE & 2.5011 & 2.5016 & +.0005 \\
    \bottomrule
  \end{tabular}
\end{table}

The corresponding cross-entropy and MAE values, per-seed outcomes, presence
quantiles, and runtime metadata remain in the machine-readable receipt.

\subsection{Test-time NULL insertion}
\label{app:insertion-check}

We append one or four exact zero-vector tokens after token construction in a
two-block, 32-dimensional host and compare predictions with the unextended
sequence. The inserted state is measured after the final block. Across all 15
task--seed pairs, every inserted hidden-carrier OAttention token remained
exactly zero in the recorded post-block tensor. Standard attention activated
the new positions and changed predictions more strongly
(\Cref{tab:insertion-results}).

\begin{table}[H]
  \centering
  \scriptsize
  \caption{Test-time insertion over five datasets and three seeds. Prediction
  shift is RMS over test predictions; state is the maximum absolute inserted
  coordinate.}
  \label{tab:insertion-results}
  \begin{tabular}{@{}lrrrr@{}}
    \toprule
    Inserted & \multicolumn{2}{c}{Prediction RMS shift} &
    \multicolumn{2}{c}{Inserted-state \(L_\infty\)} \\
    tokens & Standard & Hidden-carrier OA & Standard & Hidden-carrier OA \\
    \midrule
    1 & \(7.39\times10^{-2}\) & \(3.69\times10^{-5}\) & 1.83--7.42 & 0 \\
    4 & \(2.18\times10^{-1}\) & \(1.64\times10^{-4}\) & 1.84--7.42 & 0 \\
    \bottomrule
  \end{tabular}
\end{table}

\subsection{Training-time exact-zero columns}
\label{app:zero-column-table}

\begin{table}[H]
  \centering
  \scriptsize
  \caption{Mean test metric over three seeds in the canonical hidden-carrier
  training-time zero-column study, reported as standard/hidden-carrier
  OAttention. For Diabetes, lower RMSE is better.}
  \begin{tabular}{@{}llrrrr@{}}
    \toprule
    Task & Metric & \(k=0\) & \(k=1\) & \(k=2\) & \(k=4\) \\
    \midrule
    Iris & accuracy & .8667/.9222 & .8667/.9111 & .8778/.9111 & .9333/.8889 \\
    Wine & accuracy & .9444/.9444 & .9444/.9444 & .9444/.9444 & .9444/.9444 \\
    Breast Cancer & accuracy & .9678/.9678 & .9678/.9678 & .9678/.9678 & .9678/.9678 \\
    Digits & accuracy & .9204/.9213 & .9194/.9213 & .9194/.9213 & .9194/.9213 \\
    Diabetes & RMSE & 62.6683/62.6689 & 62.7226/62.6680 & 62.7453/62.6682 & 62.8580/62.6688 \\
    \bottomrule
  \end{tabular}
\end{table}

These values provide a bounded matched comparison only. The study was not
designed with a formal non-inferiority margin, and differences across tasks
and seeds should not be aggregated into a universal no-loss statement.

\subsection{Component-level checks}

The component tests use biased FFNs, affine normalizers, and arbitrary
additive encodings to verify exact-zero output or update at the declared
boundary. They also check convergence toward the wrapped active path as
\(\tau\to0^+\), finite gradients, tested dtypes, shape validation, block-level
insertion, the OA-only failure boundary, and one-component-at-a-time
ablations. These tests establish implementation agreement with the displayed
definitions; they do not show that every possible wrapped module is safe.

\section{Reproducibility artifacts}
\label{app:artifacts}

The implementation is in the \path{zero_activity_attention} Python package.
The principal experiment entry points are
\path{run_systematic_operator_sweep.py},
\path{run_hidden_carrier_oattention_sweep.py},
\path{run_ft_transformer_paper_suite.py},
\path{run_zero_column_training_factorial.py},
\path{run_o_standardize_sweep.py}, and
\path{run_otransformer_suite.py}.

The following machine-readable artifacts support the numerical values in the
publication manuscript. Paths are relative to
\path{experiments/results/}:
\begin{description}[leftmargin=3.1cm,style=nextline,itemsep=0.25em]
  \item[Hidden-carrier OAttention]
    \path{remote/hidden-carrier-oattention-dgx2-20260821/hidden-carrier-oattention-dgx2-20260821-v1.json}
  \item[Operator sweep]
    \path{systematic-operator-sweep-dgx2.json}
  \item[Matched adapters]
    \path{adapter-nonregression-dgx2.json},
    \path{digits-adapter-nonregression-dgx2.json}, and
    \path{regression-adapter-nonregression-dgx2-fixed.json}
  \item[Canonical matched-task rerun]
    \path{hidden-carrier-task-rerun-20260821-v1/broad-adapter-hidden-carrier-20260821.json},
    \path{hidden-carrier-task-rerun-20260821-v1/ft-transformer-hidden-carrier-20260821.json}, and
    \path{hidden-carrier-task-rerun-20260821-v1/zero-column-hidden-carrier-20260821.json}
  \item[Test-time insertion]
    \path{ft-transformer-paper-suite-dgx2-20260820.json}
  \item[Training-time zeros]
    \path{remote/zero-column-training-factorial-dgx2-20260820/zero-column-training-factorial-dgx2-20260820-v2.json}
  \item[OStandardize]
    \path{remote/ostandardize-20260821/ostandardize-sweep-20260821-v2.json}
  \item[OTransformer]
    \path{remote/otransformer-dgx2-20260821/otransformer-suite-dgx2-20260821-v4.json}
  \item[Canonical OTransformer rerun]
    \path{hidden-carrier-task-rerun-20260821-v1/otransformer-hidden-carrier-20260821.json}
  \item[Pretrained TabPFN v3 near-identity]
    \path{tabpfn3-o-closure-20260821-v1/openml_regression_matrix_tau1e-8.json}
  \item[Pretrained TabPFN v3 finite-$\tau$ stress]
    \path{tabpfn3-o-closure-20260821-v1/openml_regression_matrix_tau1_final.json}
  \item[Protocol clarification]
    \path{remote/otransformer-dgx2-20260821/protocol-clarification-v4.json}
\end{description}

The canonical task rerun uses source SHA-256 beginning
\texttt{f9fc6e9b52d968a2}; the complete digest is stored in each receipt.
Its receipt digests begin with \texttt{c8198791}, \texttt{3dbd656c},
\texttt{d9484ac3}, and \texttt{24e4afe0} for the adapter, FT, zero-column,
and OTransformer matrices, respectively. The historical hidden-carrier
operator source and runner SHA-256 values begin with
\texttt{a87ac5c8} and \texttt{71a1c463}; the OStandardize source and runner
values begin with \texttt{e4276970} and \texttt{50af6245}; the historical
OTransformer source and runner values begin with \texttt{e4276970} and
\texttt{8e3fc992}. The complete
digests for the TabPFN v3 runner and checkpoint begin with
\texttt{e73c1134} and \texttt{311ce18d}; its copied vendor architecture begins
with \texttt{733f0f24}. Environment metadata, per-run configurations, and
task-level outputs
are stored in the artifacts above. Historical runs and release verification
records remain in the repository but are not pooled into the publication
tables.

\end{document}